\title{A Constraint Programming Approach for Mining Sequential
  Patterns in a Sequence Database}
\author{
Jean-Philippe M\'etivier$^1$ \and Samir Loudni$^1$ \and Thierry Charnois$^2$
}
\institute{
$^1$ GREYC (CNRS UMR 6072) -- University of Caen\\ 
Campus II C\^ote de Nacre, 14000 Caen - France\\
$^2$ LIPN  CNRS (UMR 7030) -- University PARIS 13 \\  
99, avenue Jean-Baptiste Clément 93430 Villetaneuse - France
}
\begin{document}
\maketitle

\begin{abstract}
Constraint-based pattern discovery is at the core of numerous data
mining tasks. Patterns are extracted with respect to a given set of
constraints (frequency, closedness, size, etc). In the context of
sequential pattern mining, a large number of devoted techniques have
been developed for solving particular classes of constraints. The aim of
this paper is to investigate the use of Constraint Programming (CP) to model and mine 
sequential patterns in a sequence database. Our CP approach offers a
natural way to simultaneously combine in a same framework a large set
of constraints coming from various origins. 
Experiments show the feasibility and the interest of our approach.
\end{abstract}

\section{Introduction}
\label{section:introduction}
Sequential pattern mining is a well-known data mining technique
introduced in~\cite{DBLP:conf/icde/AgrawalS95} to find regularities in
a database of sequences. This problem is central in many application
domains, such as web usage mining~\cite{DBLP:conf/kdd/CadezHMSW00}, 
bioinformatics and text mining~\cite{DBLP:conf/cicling/BechetCCC12}.  
For effectiveness and efficiency considerations, many authors
\cite{DBLP:series/ads/DongP07,DBLP:conf/cikm/Zaki00} have 
promoted the use of constraint to focus on the most promising
knowledge by reducing the number of extracted patterns to those of a
potential interest given by the final user. The most popular example
is the minimal frequency constraint: it addresses all sequences having
a number of occurrences in the database exceeding a given minimal threshold. 

There are already in the literature many algorithms to extract 
{\it sequential patterns} (e.g. \texttt{GSP}
\cite{DBLP:conf/edbt/SrikantA96}, \texttt{SPADE}
\cite{DBLP:journals/ml/Zaki01}, \texttt{PrefixSpan}
\cite{DBLP:conf/icde/PeiHPCDH01}), 
{\it closed} sequential patterns (e.g. \texttt{CloSpan}
\cite{DBLP:conf/sdm/YanHA03}, \texttt{BIDE}
\cite{DBLP:conf/icde/WangH04}) or sequential patterns
satisfying {\it regular expression} (e.g. \texttt{SPIRIT} \cite{DBLP:journals/tkde/GarofalakisRS02}). 
All the above methods, though efficient, suffer from two major
problems. First, they tackle particular classes of constraints
(i.e. {\it monotonic} and {\it anti-monotonic} ones) by using
devoted techniques. However, several practical constraints required in
data mining tasks, such as regular expression, aggregates, do
not fit into these classes. Second, they lack of
generic methods to push various constraints into the sequential
pattern mining process. Indeed, adding and handling simultaneously
several types of constraints in a nice and elegant way beyond the
few classes of constraints studied is not still trivial. 
The lack of generic approaches restrains the discovery of useful
patterns because the user has to develop a new method each time he
wants to extract patterns satisfying a new type of constraints. In
this paper, we address this open issue by proposing a generic approach
for modelling and mining sequential patterns under various constraints
using Constraint Programming (CP). 

Our proposition benefits from the recent progress on
cross-fertilization between data mining and CP for itemset mining
\cite{DBLP:journals/ai/GunsNR11,CI-KHIARI-10b,DBLP:conf/kdd/RaedtGN08}.  
The common point of all these methods is to model in a
declarative way pattern mining as Constraint Satisfaction Problems
(CSP), whose resolution provides the complete set of solutions
satisfying all the constraints. The great advantage of this modelling
is its flexibility, it enables to define and to push new constraints 
without having to develop new algorithms from scratch. 

The key contribution of this paper is to propose a CP modelling for
the problem of mining sequential patterns in a sequence 
database~\cite{DBLP:conf/icde/AgrawalS95}. 
Our approach addresses in a unified framework a large set
of constraints. This includes constraints such as frequency,
closedness and size, and other constraints such as regular expressions,
gap and constraints on items. Moreover, our approach enables to
combine simultaneously different types of constraints. This leads to
the first CP-based model for discovering sequential 
patterns in a sequence database under various
constraints. Experiments on a case study on biomedical literature for
discovering gene-RD relations from PubMed articles show the feasibility and the
interest of our approach. 

This paper is organized as follows. Section \ref{section:Sequences} gives the necessary
definitions and presents the problem formulation. Section
\ref{section:CP} introduces the main principles of constraint
programming. Section \ref{section:Modeling} describes our
CP model for mining sequential patterns in a sequence database. We
review some related work in Section \ref{section:relatedwork} and
Section \ref{section:experimentations} reports in depth a
case study from the biomedical literature domain for
discovering gene-RD relations from PubMed articles. Finally, we
conclude and draw some perspectives.  

\section{Sequential Pattern Mining}
\label{section:Sequences}
In this section, we introduce sequential patterns defined by Agrawal
et al. \cite{DBLP:conf/icde/AgrawalS95}.

\subsection{Sequential Patterns}

Sequential pattern mining~\cite{DBLP:conf/icde/AgrawalS95} is a data
mining technique that aims at discovering correlations between events
through their order of appearance. Sequential pattern mining is an
important field of data mining with broad applications (e.g., biology,
marketing, security) and there are many algorithms to extract frequent
sequences~\cite{DBLP:conf/edbt/SrikantA96,DBLP:journals/ml/Zaki01,DBLP:conf/sdm/YanHA03}. 

In the context of sequential patterns extraction, a
\emph{sequence} is an ordered list of  literals called {\em
  items}.  A sequence $s$ is denoted by $\langle i_1, i_2 \ldots i_n \rangle$
where $i_k$, $1 \leq k \leq n$, is an item.
Let $s_1 = \langle i_1, i_2 \ldots i_n \rangle$ and $s_2 = \langle i'_1, 
i'_2 \ldots i'_m \rangle$ be two sequences.  $s_1$ is \emph{included}
in $s_2$ if there exist integers $1 \leq j_1 < j_2 < \ldots < j_n \leq m$
such that $i_1 = i'_{j_1}$, $i_2 = i'_{j_2}$, $\ldots$, $i_n = i'_{j_n}$.
$s_1$~is called a \emph{subsequence} of $s_2$.  $s_2$ is called a
\emph{super-sequence} of $s_1$, denoted by $s_1 \preceq s_2$. 
For example the sequence $\langle a~ b~ d~ c
\rangle$ is a super-sequence of $\langle b~ c \rangle$: $\langle b~ c
\rangle \preceq \langle a~ b~ d~ c \rangle$.
A \emph{sequence database} $SDB$ is a set of tuples $(sid, s)$, where $sid$
is a sequence identifier and $s$ a sequence. For instance, Table~\ref{tab:BDD}
represents a sequence database of four sequences.
\begin{table}[t]
	\begin{center}
		\caption{$SDB_1$: a sequence database}
		\label{tab:BDD}
		\small{
			\begin{tabular}{cc}
				\hline
				~Sequence identifier~ & ~Sequence~ \\
				\hline
				$1$ & $\langle  a\; b\; c\; d\; a \rangle$  \\
				$2$ & $\langle  d\; a\; e \rangle$ \\
				$3$ & $\langle  a\; b\; d\; c\; \rangle$ \\
				$4$ & $\langle c\; a \rangle$ \\
				\hline
			\end{tabular}
		}
	\end{center}
\end{table}
A tuple $(sid, s)$ \emph{contains} a sequence $s_1$, if $s_1 \preceq
s$.  The \emph{support} of a sequence $s_1$ in a sequence database
$SDB$, denoted $sup(s_1)$, is the number of tuples containing $s_1$ in the
database\footnote{The
  relative support is also used: \\ $sup_{SDB}(T) =
  \frac{\displaystyle |\{ (sid, s) ~ s.t. ~ (sid, s) \in SDB \wedge
    (T \preceq s) \}|}{\displaystyle |SDB|}$}.  
For example, in Table~\ref{tab:BDD}, $sup(\langle c\;  a  \rangle)~=~2$.  

A \emph{frequent sequential pattern} is a sequence such that its
support is greater or equal to a given threshold: $minsup $. The
\emph{sequential pattern mining} problem is to find the
\emph{complete} set of frequent sequential patterns with respect to a given
sequence database SDB and a support threshold $minsup$. 

\subsection{Sequential Pattern Mining under Constraints}
\label{sec-constraints}

%

In order to drive the mining process towards the user objectives and
to eliminate irrelevant patterns, one can define constraints~\cite{DBLP:series/ads/DongP07}. 
The most commonly used constraint is the frequency constraint (that
assigns a value to $minsup $). We review some of the most important
constraints for the sequential mining problem~\cite{DBLP:series/ads/DongP07}. 

\medskip
\noindent
\textbf{Closedness Constraint} The \emph{closed sequential patterns}~\cite{DBLP:conf/sdm/YanHA03} are 
a condensed representation of the whole set of sequential patterns. This condensed representation eliminates redundancies according to the frequency constraint.  A frequent
sequential pattern $s$ is closed if there exists no other frequent sequential
pattern $s'$ such that $s \preceq s'$ and $sup(s)=sup(s')$.  For instance, with
$minsup ~=~2$, the sequential pattern $\langle b~ c\rangle$ from
Table~\ref{tab:BDD} is not closed whereas the pattern $\langle a\; b\;
c\rangle$ is closed.

\medskip
\noindent
\textbf{Item constraint.} An item constraint specifies subset of items
that should or should not be present in the sequential 
patterns. 
For instance, if we impose the constraint $ C_{item} \equiv sup(p) \geq 2 \wedge (a \in p) \wedge (b \in p)$, 
three sequential patterns are mined from Table~\ref{tab:BDD}: $p_1
= \langle a\; b\rangle$, $p_2 = \langle 
a\; b\; c\rangle$ and $p_3 = \langle a\; b\; d\rangle$. 

\medskip
\noindent
\textbf{Size constraint.} The aim of this
constraint is to limit the length of the patterns, the length being 
the number of occurrences of items. The length of a pattern $p$ will
be denoted by $len(p)$. For example, if $len(p) \ge 3 \wedge sup(p)
\geq 2$, only two sequential patterns are 
extracted ($p_2$ and $p_3$).

\medskip
\noindent
\textbf{Gap constraint.} Another widespread constraint is the gap
constraint. A sequential pattern with a gap constraint $C_{gap} \equiv[M,N]$,
denoted by $p_{[M,N]}$, is a pattern such as at least $M$ items
and at most $N$ items are allowed between every two 
neighbor items, in the original sequences.
For instance, let $p_{[0,2]}=\langle c\; a \rangle$ and
$p_{[1,2]}=\langle c\; a \rangle$ be two patterns with two different gap
constraints and let us consider the sequences of Table~\ref{tab:BDD}. 
Sequences $1$ and $4$ support pattern $p_{[0,2]}$ (sequence $1$
contains one item between $(c)$ and $(a)$ whereas sequence $4$ contains no item
between $(c)$ and $(a)$).  But only Sequence $1$  supports  $p_{[1,2]}$
(only sequences with one or two items between $(c)$ and $(a)$ support  this pattern).

\medskip
\noindent
\textbf{Regular expression constraint.} A regular expression
constraint $C_{RE}$ is a constraint specified as a regular expression
over the set of items. A sequential pattern satisfies $C_{RE}$ if and 
only if the pattern is accepted by its equivalent deterministic finite
automata \cite{DBLP:journals/tkde/GarofalakisRS02}. For instance, the
two sequential patterns $\langle a\; b\; c\rangle$ and $\langle
a\; d\; c\rangle$ from Table~\ref{tab:BDD}  satisfy the regular
expression constraint $C_{RE} =a * \{bb|bc|dc\}$.

\section{Constraint Programming}
\label{section:CP}
In this section, we first introduce basic constraint programming concepts and then
present two constraints of interest: {\tt Among} and {\tt Regular}. 

\subsection{Preliminaries}

Constraint programming (CP) is a generic framework for solving
combinatorial problems modelled as Constraint Satisfaction Problems
(CSP). The key power of CP lies in its declarative
approach towards problem solving: in CP, the user specifies the set of
constraints which has to be satisfied, and the CP solver generates the
correct and complete set of solutions. In this way, the specification
of the problem is separated from the search strategy. 

A {\it Constraint Satisfaction Problem} (CSP) consists of a finite
set of variables $\mathcal{X} = \{X_1, \ldots, X_n\}$ with finite
domains $\mathcal{D} = \{D_1, \ldots, D_n\}$ such that each $D_i$ is
the set of values that can be assigned to $X_i$, together with a
finite set of constraints $\mathcal{C}$, each on a subset of
$\mathcal{X}$. A constraint $C \in \mathcal{C}$ is a subset of the
cartesian product of the domains of the variables that are in $C$. 
The goal is to find an assignment ($X_i = d_i$) with $d_i \in D_i$
for $i = 1, \ldots, n$, such that all constraints are satisfied. This
assignment is called a solution to the CSP. For a given assignment
$t$, $t[X_i]$ denotes the value assigned to $X_i$ in $t$. 

In Constraint Programming (see \cite{DBLP:books/daglib/0018273}), the
solution process consists 
of iteratively interleaving search phases and propagation phases. The
search phase essentially consists of enumerating all possible
variable-value combinations, until we find a solution or prove that
none exists. It is generally performed on a tree-like structure. In
order to avoid the systematic generation of all the combinations and
reduce the search space, the propagation phase shrinks the search
space: each constraint propagation algorithm removes values that a
priori cannot be part of a solution w.r.t. the partial assignment
built so far. 
The removal of inconsistent domain values is called {\it filtering}. 

An important modelling technique from CP are the {\it global
  constraints} that provide shorthands to often-used combinatorial
substructures.   
Global constraints embed specialized filtering techniques that exploit
 the underlying structure of the constraint to  establish stronger
 levels of consistency much more efficiently. Nowadays,
global constraints are considered to be one of the most important
components of CP solvers in practice. 

\subsection{The Among and Regular Global Constraints}

\subsubsection{Among Constraint.}This constraint restricts the number of occurrences of
some given values in a sequence of $n$ variables
\cite{DBLP:journals/jmcm/Beldi94}: 

\begin{definition}[\texttt{Among} constraint, \cite{DBLP:journals/jmcm/Beldi94}]
Let $X$=$X_1$,$\ldots$,$X_n$ be a sequence of $n$ variables, $S$
a set of values, $D^X$ the cartesian product of the variable domains in $X$. 
Let $l$ and $u$ be two integers s.t. $0 \leq l \leq u \leq n$.
$$
{\tt Among}(X,S,l,u) = \{
t \in D^X \mid l \le \, \mid \{i, t[X_i] \in S\} \mid\,  \le u \}
$$
\end{definition}

The {\tt Among} constraint can be encoded by channelling into $0$/$1$
variables using the sum constraint \cite{DBLP:conf/csclp/BessiereHHKW05}: 
$\forall i \in \{1,\ldots, n\} \;
B_i = 1 \;\leftrightarrow\; t[X_i]  \in S \;\wedge\; l \leq \sum_{i=1}^{n}
B_i \leq u.
$

\subsubsection{Regular Constraint.}
Given a deterministic finite automaton $M$ describing a regular
language, constraint  {\tt Regular}$(X,M)$ ensures that every sequence
of values taken by the variables of $X$  have to be a member of the
regular language recognised by $M$: 

\begin{definition}[{\tt Regular} constraint, \cite{regular}]
Let $M$ be a Deterministic Finite Automaton (DFA),  $\mathcal{L}(M)$
the language defined by $M$, $X$ a sequence of $n$
variables. {\tt Regular}$(X,M)$ has a solution {\it iff}
$\exists\, t \in D^X$ s.t. $t \in \mathcal{L}(M)$. 
\end{definition}

In \cite{regular}, {\tt Regular} constraint over a sequence of $n$ variables is
modelled by a layered directed graph $\mathcal{G}= (V, U)$, and a solution to 
{\tt Regular}($X$,$M$) corresponds to an $s$-$t$ path in graph
$\mathcal{G}$, where $s$ is the ``source'' node and $t$ the ``sink''
node.

\section{Modeling Sequential Patterns using CP}
\label{section:Modeling}
This section presents our CP modelling for the sequential pattern
mining problem. 
Let $I= \{i_1,i_2,\ldots,i_n\}$ be a
set of $n$ items, $EOS$ a symbol not
belonging to $I$ ($EOS \notin I)$ denoting the end of a sequence, 
$SDB$ a set of $m$ sequences and $\ell$ the maximal length of the
sequence in $SDB$.

\subsection{Modelling an Unknown Sequential Pattern}

Let $p$ be the unknown sequential pattern we are looking for. First,
$\ell$ variables $\{P_1,P_2,\ldots,P_{\ell}\}$ having $D_i$ = $I \cup \{EOS\}$
for domain are introduced to represent $p$. Second, $m$ boolean
variables $S_s$ (having $\{0,1\}$ for domain) are used such that ($S_s$
= $1$) iff sequence $s$ contains the unknown sequential pattern $p$:  
\begin{equation} \label{eq1}
(S_s=1) \Leftrightarrow (p \preceq s) 
\end{equation}

In equation (\ref{eq1}), ($S_s$ = $1$) if pattern $p$ is a \emph{subsequence}
of $s$; $0$ otherwise. So, $sup(p) = \Sigma_{s \in SDB} \, S_s$.


\subsection{Modelling Sequential Pattern Mining}
\label{capturing_subsection}

Let $SDB$ be a sequence database and let a support threshold
$minsup$. To encode that ``$p \preceq s$'', we first have to generate
an automaton $A_s$ capturing all subsequences that can be found inside
a given sequence $s$. Then, we have to impose  a {\tt Regular} constraint stating
that the unknown pattern $p$ must be recognized by the automaton
$A_s$.

To reduce the number of states of the automaton, for each
sequence, we consider only its frequent items in the $SDB$
w.r.t. $minsup$. Indeed, any super-pattern of an infrequent item 
cannot be frequent. Figure \ref{auto_reification} 
shows an example of automaton generated for the third sequence of 
Table~\ref{tab:BDD}. Algorithm \ref{algo1} depicts the pseudo-code for
generating the automaton $A_s$. 

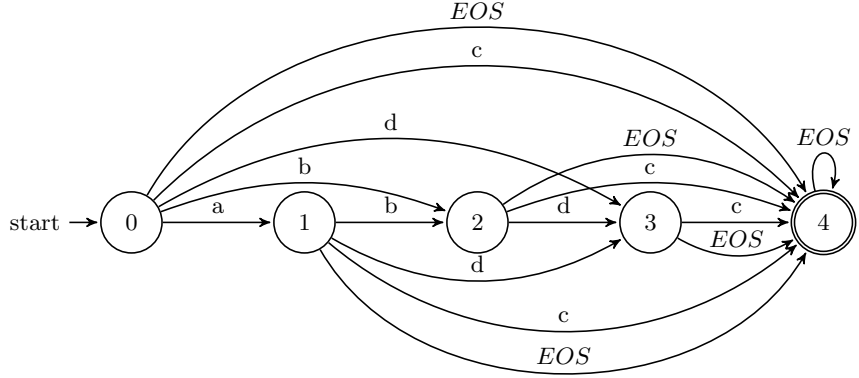
\begin{figure}[t]
\centering
\begin{tikzpicture}[->,>=stealth',shorten >=1pt,auto,node distance=2.3cm,
                    semithick]

  \node[initial,state] 		(q0)               {$0$};
  \node[state]         		(q1) [right of=q0] {$1$};
  \node[state]         		(q2) [right of=q1] {$2$};
  \node[state]         		(q3) [right of=q2] {$3$};
  \node[accepting,state]    (q4) [right of=q3] {$4$};

  \path
  (q0) 	edge     				node {a} (q1)
		edge [bend left=20]  	node {b} (q2)
        edge [bend left=30]  	node {d} (q3)
        edge [bend left=45]  	node {c} (q4)
        edge [bend left=60]  	node {$EOS$} (q4)
  (q1)  edge              		node {b} (q2)
	    edge [bend right=30] 	node {d} (q3)
	    edge [bend right=40] 	node {c} (q4)
        edge [bend right=60] 	node {$EOS$} (q4)
  (q2)  edge              		node {d} (q3)
        edge [bend left=20] 	node {c} (q4)
	    edge [bend left=35] 	node {$EOS$} (q4)
  (q3)  edge 				 	node {c} (q4)
 		edge [bend right=30] 	node {$EOS$} (q4)
  (q4) 	edge [loop above] 		node {$EOS$} (q4);
\end{tikzpicture}
\caption{The automaton modelling all subsequences of the sequence
  $\langle a~b~d~c\rangle$.\label{auto_reification}} 
\end{figure}



To enumerate the complete set of frequent sequential patterns with
respect to a given sequence database $SDB$ and a support threshold
$minsup$, we need to express that the unknown sequential pattern $p$ 
occurs at least $minsup$ times. This problem is modelled by the
following constraints: 

\begin{theorem}[Frequent Sequential Pattern
  Mining\label{base_model_theo}] Sequential pattern mining is
  expressed by the following constraints: 
\begin{eqnarray}\label{reg_constraint}
\forall s \in SDB : S_s = 1 \leftrightarrow \mbox{{\tt Regular}}(p,A_s)
\end{eqnarray}
\begin{eqnarray}\label{f_constraint}
sup(p) = \sum_{s \in SDB} S_s  \geq minsup
\end{eqnarray}
\end{theorem}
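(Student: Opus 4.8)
\noindent
The plan is to prove that the conjunction of constraints~(\ref{reg_constraint}) and~(\ref{f_constraint}) exactly captures frequent sequential pattern mining, in the precise sense that the assignments of $P_1,\dots,P_\ell$ extendable to a solution of the CSP are in bijection with the complete set of frequent sequential patterns of $SDB$ w.r.t.\ $minsup$. Since a CSP is determined by the logical content of its constraints, this splits into soundness (every solution yields a frequent pattern) and completeness (every frequent pattern yields a solution). The whole argument hinges on one auxiliary fact: that constraint~(\ref{reg_constraint}) is a faithful, operational restatement of equation~(\ref{eq1}), i.e.\ that $\texttt{Regular}(p,A_s)$ holds precisely when $p \preceq s$; everything else is then bookkeeping, using the remark after~(\ref{eq1}) that $sup(p)=\sum_{s\in SDB}S_s$.

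\smallskip\noindent
So the first, and main, step is a lemma on the automaton $A_s$ built by Algorithm~\ref{algo1}: for every assignment of $P_1,\dots,P_\ell$, the word $P_1P_2\cdots P_\ell$ is accepted by $A_s$ iff there is an index $k$ with $0\le k\le\ell$ such that $P_{k+1}=\cdots=P_\ell=EOS$, each $P_j$ with $j\le k$ is an item that is frequent in $SDB$, and $\langle P_1,\dots,P_k\rangle\preceq s$. I would prove this by reading off the structure of $A_s$ from Algorithm~\ref{algo1} (as illustrated by Figure~\ref{auto_reification}): its states are the positions $0,1,\dots,|s'|$ of the sequence $s'$ obtained from $s$ by deleting infrequent items; from state $j$ there is a transition labelled $i$ to the least position $j'>j$ with $s'_{j'}=i$; from every state there is an $EOS$-transition to the unique accepting sink $|s'|$; and the sink carries an $EOS$ self-loop. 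An induction on $k$ then shows that the run of $A_s$ on $\langle P_1,\dots,P_k\rangle$ either gets stuck (when some $P_j$ cannot be matched, in particular when $P_j$ is infrequent) or reaches the state equal to the \emph{smallest} prefix length of $s'$ into which $\langle P_1,\dots,P_k\rangle$ embeds; the inductive step uses the standard optimality of greedy leftmost matching for subsequence embedding, which is exactly why following the deterministic ``next occurrence'' transition loses no embedding. Appending the $EOS$ suffix then drives the run to the sink, giving acceptance; conversely an accepting run must be of this shape. I expect pinning down precisely what Algorithm~\ref{algo1} constructs and formalising the greedy-matching optimality to be the one real obstacle.

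\smallskip\noindent
Granting the lemma, the theorem follows. For soundness, let a solution be given and let $p=\langle P_1,\dots,P_k\rangle$ be the non-$EOS$ prefix of $P_1,\dots,P_\ell$; by the lemma and~(\ref{reg_constraint}) we get $S_s=1\Leftrightarrow p\preceq s$ for each $s\in SDB$, hence $\sum_{s\in SDB}S_s=sup(p)$, and~(\ref{f_constraint}) then gives $sup(p)\ge minsup$, so $p$ is a frequent sequential pattern. For completeness, let $p$ be any frequent sequential pattern; it embeds in at least $minsup\ge 1$ sequences of $SDB$, so $len(p)\le\ell$, and every item of $p$ is frequent in $SDB$ (any pattern containing an infrequent item is infrequent, by anti-monotonicity of support under $\preceq$). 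Padding $p$ with $EOS$ up to length $\ell$ instantiates $P_1,\dots,P_\ell$, and setting $S_s=1$ iff $p\preceq s$ satisfies~(\ref{reg_constraint}) by the lemma and~(\ref{f_constraint}) since $\sum_{s}S_s=sup(p)\ge minsup$. Finally, the padding map $p\mapsto(P_1,\dots,P_\ell)$ is injective and, once $P_1,\dots,P_\ell$ is fixed, constraint~(\ref{reg_constraint}) forces each $S_s$, so solutions of the CSP correspond bijectively to frequent sequential patterns; hence the model enumerates the complete set, which is what the statement asserts. $\qed$
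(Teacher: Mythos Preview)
Your argument follows the same skeleton as the paper's proof: both hinge on the claim that $\texttt{Regular}(p,A_s)$ holds iff $p\preceq s$, and then read off $sup(p)=\sum_s S_s$ and the frequency bound from~(\ref{f_constraint}). The paper simply asserts the key claim (``by construction, the automaton $A_s$ encodes all sequential patterns that are subsequences of the sequence $s$'') and says nothing about completeness or the bijection; your version is considerably more careful in making the soundness/completeness split explicit and in arguing why the $P$-assignments correspond one-to-one with frequent patterns.

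One point to correct in your lemma, though it does not break the proof. You describe $A_s$ as deterministic, with a single transition from state $j$ on item $i$ to the \emph{least} position $j'>j$ with $s'_{j'}=i$, and you set up a greedy leftmost-matching argument to justify that no embedding is lost. But Algorithm~\ref{algo1} as written adds a transition from $j$ to \emph{every} later position $j'$, labelled by the item at that position; when an item repeats in $s$, this yields a nondeterministic automaton (the example in Figure~\ref{auto_reification} happens to have no repeated items, so it looks deterministic). With the nondeterministic reading, acceptance of $P_1\cdots P_k\,EOS^{\ell-k}$ is \emph{literally} the existence of indices $0<j_1<\cdots<j_k\le|s'|$ with $s'_{j_t}=P_t$, i.e.\ the definition of $\langle P_1,\dots,P_k\rangle\preceq s'$, and the greedy argument is unnecessary. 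Your determinised variant does accept the same language (your optimality-of-leftmost-matching induction is the standard proof), so the conclusion stands either way; just be aware that the automaton you analyse is not exactly the one Algorithm~\ref{algo1} builds.
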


\begin{proof}
The reified constraint (\ref{reg_constraint}) models the support constraint. By
construction, the automaton $A_s$ encodes all sequential patterns 
that are subsequences of the sequence $s$. If the {\tt Regular}
constraint is satisfied (resp. not satisfied), then $S_s$ must be
equal to $1$ (resp. must be equal to $0$). The propagation is also
performed, in a same way, from the equality constraint toward the {\tt
  Regular} constraint. 

\medskip
\noindent 
The frequency constraint (\ref{f_constraint}) enforces that at least $minsup$ variables from
$S$ must take value $1$. Together with constraint (\ref{reg_constraint}), it
enforces that at least $minsup$ sequences must support the sequential
pattern described by $p$. \hfill $\square$ 
\end{proof}



\begin{algorithm}[t]
\caption{Pseudo-code for generating $A_s$. \label{algo1}}
{\bf function} {\tt generateAutomaton}(Sequence $s$)\\
Automaton $A_s$;\\
$A_s$.nbState $\leftarrow$ {\tt length}($s$) + 1;\\
$A_s$.{\tt addInitialState}(0);\\
$A_s$.{\tt addAcceptingState}({\tt length}($s$));\\
\ForEach{($state$ $\in$ [0,{\tt length}($s$)])}{
	\ForEach{($position$ $\in$ [$state$+1,{\tt length}($s$)])}{
		$item$ $\leftarrow$ {\tt getItem}($s$,$position$);\\
		$A_s$.{\tt addTransition}($state$,$item$,$position$);\\
	}
	$A_s$.{\tt addTransition}($state$,$EOS$,{\tt length}($s$));\\
}
\Return $A_s$;
\end{algorithm}

\subsection{Modelling other Constraints}
This section shows how our CP approach enables us to express  in a
straightforward way constraints presented in
Section \ref{sec-constraints}. 

\subsubsection{Closedness Constraint.}
By definition a {\it closed} sequential pattern is the largest pattern
that is contained in all selected sequences. Intuitively, in our
encoding this corresponds to the sequential pattern having the less
number of variables $P_i$ instantiated to $EOS$. Thus, the satisfaction problem 
is turned into an optimization one:  
{\it minimize} the numbers of variables $P_i$ instantiated to
$EOS$. 
To express this minimization problem, we first have to add for each
variable $P_i$ an unary constraint $c_i$ 
stating that if ($P_i$ = $EOS$) we have to pay a cost $1$; $0$
otherwise. Then, we have to minimize the cost function 
$c(p) = \sum_{P_i \in p} c_i$ to obtain a closed sequential pattern: 
%
\begin{eqnarray}
\label{closed_constraint}
\begin{array}{llc}
                                        & \mbox{\tt minimize}_p & c(p) \\ 
                                        & sup(p) & \geq minsup \\
\end{array}
\end{eqnarray}
To enumerate the complete set of closed sequential patterns with
respect to a given sequence database $SDB$ and a support threshold
$minsup$, we need to avoid future patterns to be equal to the previously found closed
patterns. So, each time a frequent sequential pattern $p$ is proven closed, we dynamically 
add a new constraint to forbid it.

\subsubsection{Item Constraint.}  In order to specify that a subset of
items should or should not be present in the 
sequential patterns, we have to add the following constraint: 
\begin{eqnarray}
\mbox{\tt Among}(p,V,[l,u]) \label{contain_constraint}
\end{eqnarray}

\noindent
where $V$ is a subset of items, $l$ and $u$ are two integers s.t. $0
\leq l \leq u \leq \ell$. 
The \mbox{\tt Among} constraint enforces that the number of variables of 
$p$ that take a value from $V$ is at least $l$ and at most $u$. Since $p$
represents the sequential patterns we are looking for, the
previous constraint ensures that items of $V$ should be present
at least $l$ times in the mined patterns. 

To express the fact that the sequential patterns should not 
contain any item of $V$, we just have to set $l$ and $u$ to $0$.

\subsubsection{Size Constraint.} In order to consider the
frequent sequential patterns of size $k$, we just have to add the 
following constraints: 
\begin{eqnarray}
\forall i \in [1\ldots k] : & P_i \neq EOS \label{size&_constraint1} \\
\forall i \in [k+1\ldots\ell] :& P_i = EOS \label{size_constraint2}
\end{eqnarray}

The previous constraints
enforce that the $k$ first variables 
of $p$ must be different from $EOS$, while the $(\ell-k)$ remaining 
variables of $p$ must be equal to $EOS$. 

\medskip 
The minimum size constraint (i.e. $len(p) \ge k$) can be formulated by the following  
constraint: 
\begin{eqnarray}
\forall i \in [1\ldots k] : & P_i \neq EOS \label{size_constraint3} 
\end{eqnarray}

For the maximum size constraint (i.e. $len(p) \leq k)$, this can be modelled as follows: 
\begin{eqnarray}
\forall i \in [k+1\ldots\ell] :& P_i = EOS \label{size_constraint4}
\end{eqnarray}


\begin{figure}[t]
\centering
\begin{tikzpicture}[->,>=stealth',shorten >=1pt,auto,node distance=2.3cm,
                    semithick]

  \node[initial,state] 		(q0)               {$0$};
  \node[state]         		(q1) [right of=q0] {$1$};
  \node[state]         		(q2) [right of=q1] {$2$};
  \node[state]         		(q3) [right of=q2] {$3$};
  \node[accepting,state]    (q4) [right of=q3] {$4$};

  \path
  (q0) 	edge     				node {a} (q1)
		edge [bend left=20]  	node {b} (q2)
        edge [bend left=30]  	node {d} (q3)
        edge [bend left=45]  	node {c} (q4)
        edge [bend left=60]  	node {$EOS$} (q4)
  (q1)  
	    edge [bend right=30] 	node {d} (q3)
        edge [bend right=60] 	node {$EOS$} (q4)
  (q2)  
        edge [bend left=20] 	node {c} (q4)
	    edge [bend left=35] 	node {$EOS$} (q4)
  (q3)  
 		edge [bend right=30] 	node {$EOS$} (q4)
  (q4) 	edge [loop above] 		node {$EOS$} (q4);
\end{tikzpicture}
\caption{The new automaton modelling all subsequences of the
  sequence $\langle a~b~d~c\rangle$ satisfying the gap constraint [1,1].\label{auto_gap_reification}}
\end{figure}
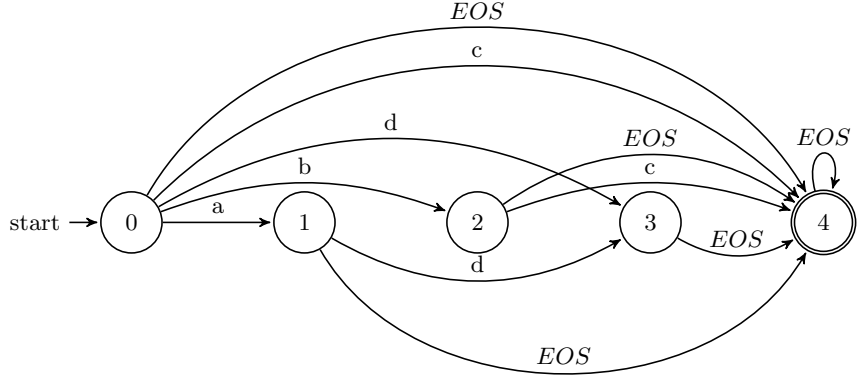

\subsubsection{Gap Constraint.} Let $p_{[M,N]}$ be the
sequential pattern satisfying the gap constraint $[M,N]$. To encode
this constraint we have to modify the construction of the 
automaton $A_s$ (cf. Theorem \ref{base_model_theo}) in a
such way that only transitions respecting the gap constraint will be
kept. Let $A_s^{[M,N]}$ be the new resulting automaton representing
all  sequential patterns that are subsequence of the sequence s and
satisfying the gap constraint. Finally, the reified constraint
(\ref{reg_constraint}) is rewrited as follows:
\begin{eqnarray}
\forall s \in SDB : S_s = 1 \leftrightarrow \mbox{{\tt Regular}}(p,A_s^{[M,N]}) \label{gap_constraint}
\end{eqnarray} 

\begin{theorem}[Frequent Sequential Pattern Mining with Gap\label{model2_theo}]
Sequential pattern mining with a gap constraint is expressed by
constraints (\ref{f_constraint}) and (\ref{gap_constraint}).  
\end{theorem}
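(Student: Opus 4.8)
The plan is to reduce this statement to Theorem~\ref{base_model_theo} by substituting, for each $s \in SDB$, the pruned automaton $A_s^{[M,N]}$ in place of $A_s$. It therefore suffices to establish one key lemma about the automaton construction: for every sequence $s \in SDB$ and every candidate pattern $p$, the word obtained from $p$ (with its trailing variables padded by $EOS$) is accepted by $A_s^{[M,N]}$ if and only if $p \preceq s$ via an embedding $1 \le j_1 < j_2 < \cdots < j_n \le |s|$ with $M \le j_{k+1} - j_k - 1 \le N$ for all $k$, i.e. $p$ is a subsequence of $s$ respecting the gap constraint $[M,N]$. Given this lemma, the argument of Theorem~\ref{base_model_theo} transfers verbatim: the reified constraint (\ref{gap_constraint}) forces $S_s = 1$ exactly on the sequences $s$ that support $p_{[M,N]}$, and (\ref{f_constraint}) then enforces that at least $minsup$ such sequences exist, so the solutions are precisely the frequent sequential patterns under the gap constraint.

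First I would make the pruning rule explicit. In Algorithm~\ref{algo1}, a transition from state $i$ to state $j$ (with $j > i$) on the item at position $j$ of $s$ models ``jumping'' from the last matched position $i$ to the next matched position $j$, leaving the $j - i - 1$ intervening items unmatched. To respect the gap, one keeps such a transition only when $M \le j - i - 1 \le N$, which yields $A_s^{[M,N]}$ (Figure~\ref{auto_gap_reification} illustrates the case $[1,1]$). A subtlety worth spelling out in the proof is that the transitions leaving the source state $0$ and the $EOS$ transitions are retained unconditionally: the gap constraint is imposed only between two neighbouring items of the pattern, so there is no ``gap'' before the first matched item nor after the last one. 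Then I would prove the lemma by induction on the number of matched items. In the forward direction, an accepting path $0 = s_0 \to s_1 \to \cdots \to s_r$ labelled by the non-$EOS$ prefix $i_1 \cdots i_r$ of $p$ gives the embedding $j_k = s_k$, and each used inter-item transition satisfies $M \le s_{k+1} - s_k - 1 \le N$ by construction. Conversely, a gap-respecting embedding of $p$ into $s$ determines such a path step by step, each step using a transition retained during pruning, after which the remaining $EOS$ symbols are consumed by the $EOS$ transitions into and the self-loop on the accepting state.

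The main obstacle I expect is precisely the correctness of the automaton construction, i.e. the lemma above, and within it getting the boundary conditions right: the absence of a gap constraint before the first pattern item, the handling of the $EOS$ padding, and the convention ``number of skipped items $= j - i - 1$'' (not $j - i$). Once the lemma is in place, the remainder is a mechanical reuse of the reasoning of Theorem~\ref{base_model_theo}, with no new ingredient needed for the frequency constraint (\ref{f_constraint}).
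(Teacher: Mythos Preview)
Your proposal is correct and follows exactly the route the paper intends: replace each $A_s$ by the pruned automaton $A_s^{[M,N]}$ and re-run the argument of Theorem~\ref{base_model_theo}. In fact the paper gives no formal proof for this theorem at all---it merely states the modified construction and asserts the result---so your explicit correctness lemma for $A_s^{[M,N]}$ (and your care with the source-state and $EOS$ boundary cases) is strictly more detailed than what the paper provides.
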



Figure \ref{auto_gap_reification} gives the new automaton obtained
from the automaton of Figure \ref{auto_reification} with the gap
constraint $[1,1]$. For instance, the sequential pattern $\langle
a~b\rangle$ does not satisfy the gap constraint; it is not 
recognized by the new automaton. 

To generate the automaton $A_s^{[M,N]}$ for a sequence $s$, we need to
modify Algorithm~\ref{algo1} in a such way that only valid transitions satisfying 
the gap constraint $[M,N]$ are considered . This is done by adding the following condition inside the
second loop foreach: 
$
(state == 0 \mid\mid M \leq position - state \leq N)
$ (see Algorithm~\ref{algo1}).

\subsubsection{Regular Expression Constraint.} Let $A_{re}$ be an
automaton encoding a regular expression over the set of items. Then,
the regular expression constraint can be formulated as follows: 
\begin{eqnarray}
\mbox{\tt Regular}(p,A_{re}) 
\end{eqnarray}

As presented in \cite{DBLP:journals/tcs/ChangP97}, a regular expression can be translated
into a deterministic finite automaton. Thus, the {\tt Regular}
constraint over $p$ ensures that every sequence of values taken by the
variables of $p$ have to be a member of the regular language
recognised by $A_{re}$, therefore recognized by the regular expression
associated to $A_{re}$.


\section{Related Work}
\label{section:relatedwork}
{\bf Computing Sequential Patterns.} 
In the context of constraint-based sequential pattern mining, several
algorithms have been
proposed~\cite{DBLP:conf/edbt/SrikantA96,DBLP:journals/ml/Zaki01,DBLP:conf/icde/PeiHPCDH01,DBLP:conf/sdm/YanHA03,DBLP:conf/icde/WangH04,DBLP:journals/tkde/GarofalakisRS02}. All
these algorithms exploit properties of the constraints (i.e.,
monotonicity, anti-monotonicity or succinctness) to perform 
effective pruning. For constraints that do
not fit in these categories, they are handled by relaxing them to
achieve some nice property (like anti-monotonicity) facilitating the
pruning. For instance, Garofalakis et
al.~\cite{DBLP:journals/tkde/GarofalakisRS02} proposed
regular expressions as constraints and developed a family of
\texttt{SPIRIT} algorithms each achieving a different kind of
relaxation of the regular expression constraint. Such a method,
though interesting, makes tricky the integration of such  
constraints in a nice and elegant way. So, unlike these algorithms,
our approach enables to address in a unified framework a broader set
of constraints, and more importantly, to combine them simultaneously.

\medskip
\noindent
{\bf CP for Pattern Mining.} 
In the context of local patterns, an approach using CP for itemset
mining has been proposed in  
\cite{DBLP:conf/kdd/RaedtGN08}. This approach addresses in a unified
framework a large set of local patterns and constraints such as
frequency, closedness, maximality, constraints that are monotonic or
anti-monotonic. To deal with richer
patterns satisfying properties involving several local patterns,
different extensions have been proposed, such as pattern
sets~\cite{DBLP:journals/tkde/GunsNR13}, n-ary
patterns~\cite{CI-KHIARI-10b}, top-k
patterns~\cite{DBLP:conf/dis/UgarteBLC12} or
skypatterns~\cite{DBLP:conf/f-egc/UgarteBLCL13}.
Our approach also benefits from the recent progress on
cross-fertilization between data mining and CP for itemset
mining, but it addresses a different problem with a different
modelling. 

\medskip
\noindent
{\bf CP for Sequence Mining.} More recently, Coquery et
al.~\cite{DBLP:conf/ecai/CoqueryJSS12} have 
proposed a SAT-Based approach for Discovering frequent, closed and
maximal sequential patterns with wildcards in only a single sequence of items. 
However, unlike \cite{DBLP:conf/ecai/CoqueryJSS12}, our approach considers
a database of sequences of items. Moreover, in
\cite{DBLP:conf/ecai/CoqueryJSS12}, the sequential patterns with
non-contiguous items are modelled using empty items as wildcards. But the gap between the items have to be fixed. Then,  
for instance the two sequential patterns $\langle a~o~b\rangle$ and $\langle
a~o~o~b\rangle$ are considered different. 
On the contrary, our modelling enables us to define any (minimal or maximal) value for the gap. 


\section{Experimentations}
\label{section:experimentations}
Experiments are conducted on texts from biological and medical
texts. The goal is to discover relations between genes and rare
diseases. The details of this application is given
in~\cite{BCCC2012cbms}. In this 
section, we focus on the extraction of sequential patterns using our
CP approach, and we give quantitative results showing the relevant of
the approach.  

\subsection{Case Study}

\subsubsection{Settings.}
We created a corpus from the PubMed database
using HUGO\footnote{www.genenames.org} dictionary and Orphanet
dictionary to query the database to 
get sentences having at least one rare disease and one gene. $17,527$ sentences
have been extracted in this way and we labelled the gene and rare
disease (RD) names thanks to the two dictionaries.  For instance, the sentence
``\textit{\textbf{$<$disease$>$Muir-Torre
    syndrome$<${\tt\char`\\}disease$>$} is usually inherited in an
  autosomal dominant fashion and associated with mutations in the
  mismatch repair genes, predominantly in
  \textbf{$<$gene$>$MLH1$<${\tt\char`\\}gene$>$} and
  \textbf{$<$gene$>$MSH2$<${\tt\char`\\}gene$>$} genes.}'' contains
one recognized RD, and two recognized genes.    
These $17,527$ sentences are the training corpus from which we
experiment the sequential pattern extraction.

\subsubsection{Sequential Pattern Extraction.}
Sequences of the SDB are the sentences of the training corpus: an item
corresponds to a word of the sentence. We carry out a POS tagging of
the sentences thanks to the TreeTagger tool~\cite{treetagger}. In the
sentences, each word is replaced by its lemma, except for gene names
(respectively disease names) which are replaced by the generic item
$GENE$ (respectively $DISEASE $). Note that unlike machine learning
based methods, our approch does not require to annotate the relations:
they are discovered.  

In order to discover sequential patterns, we use usual constraints 
such as \textit{the minimal frequency} and \textit{the minimal length}
constraints and other useful constraints expressing some \textit{linguistic
knowledge} (e.g. \textit{membership} and \textit{association}
constraints). The goal is to retain sequential
patterns which convey linguistic regularities (e.g., gene-rare disease
relationships). Our method offers a natural way to simultaneously combine
in a same framework these constraints coming from various origins. We
briefly sketch these constraints. 

\noindent 
{\footnotesize $\bullet$} \textit{The minimal frequency constraint.} Three values of minimal frequency have been experimented:
$2\%$, $5\%$, and $10\%$.\\
{\footnotesize $\bullet$} \textit{The minimal length constraint.} The aim of this
constraint is to remove sequential patterns that are too small w.r.t. 
the number of items (number of words) to be relevant
linguistic patterns. We tested this constraint with a value set to $3$. \\
{\footnotesize $\bullet$} \textit{The membership constraint.} This constraint
enables to filter out sequential patterns that do not contain some
selected items. For example, we express that the extracted patterns
must contain at least three items (expressing the linguistic
relation): $GENE$, $DISEASE$ and noun or verb\footnote{For each word
  (i.e. item), its grammatical category is stored in a base.}. We used
the item constraint to enforce this constraint.\\
{\footnotesize $\bullet$} \textit{The association constraint.} This constraint
expresses that all sequential patterns that contain the verb item must contain 
its lemma and its grammatical category. We used the  item constraint to enforce 
this constraint. \\ 
{\footnotesize $\bullet$} \textit{The closedness constraint.} In order to exclude redundancy between patterns, we used
{\it closed} sequential patterns.

\subsubsection{Experimental Protocol.} 
We carried out experiments on several subsets of the PubMed dataset
with different sizes ranging from $50$ to $500$ sentences.  
A timeout of $10$ hours has been used. 
For each subset, we report the number of extracted closed sequential
patterns and the CPU-times to extract them (in seconds). When the
timeout is reached, the number of extracted patterns (until the
timeout) is given in parenthesis. 

All experiments were conducted on AMD Opteron $2.1$ GHz and a RAM of
$256$ GB.  We implemented our proposal in C++ using the library
toulbar2\footnote{\tt
  https://mulcyber.toulouse.inra.fr/projects/toulbar2.} for solving
constrained optimization problems modelled as {\it Cost Function Network} (CFN). 

\subsection{Results}

\begin{table}[t]
\centering
{\small
\begin{tabular}{|l|r|r|r|r|r|r|r|r|r|r|}
\hline
\#sentences &   \multicolumn{2}{c|}{50} & \multicolumn{2}{c|}{100} & \multicolumn{2}{c|}{150} & \multicolumn{2}{c|}{200} & \multicolumn{2}{c|}{250}\\ 
     	     &   \#sol. & time  &   \#sol. & time  &   \#sol. & time&   \#sol. & time&   \#sol. & time\\
\hline
freq $>$ 2\% & 129 & ~~1,105 & 329 & ~12,761 & 441 & ~35,164 & (89) & -- & (34) & --\\
\hline
freq $>$ 5\% & 47 & 285 & 67 & 1,571 & 81 & 2,091 & 94 & ~~4,119 & 119 & ~~8,516\\
\hline
freq $>$ 10\%& 4 & 53 & 21& 251 & 26 & 577 & 29 & 1,423 & 28 & 2,764 \\  
\hline
\end{tabular}

\medskip

\begin{tabular}{|l|r|r|r|r|r|r|r|r|r|r|}
\hline
\#sentences &   \multicolumn{2}{c|}{300} & \multicolumn{2}{c|}{350} & \multicolumn{2}{c|}{400} & \multicolumn{2}{c|}{450} & \multicolumn{2}{c|}{500}\\  
     	     &   \#sol. & time  &   \#sol. & time  &   \#sol. & time&   \#sol. & time&   \#sol. & time\\
\hline
freq $>$ 2\% & (129) & -- & (45) & -- & (10) & -- & (1) & -- & (0) & --\\
\hline
freq $>$ 5\% & 101 & ~~9,620 & 93 & ~16057 & 83 & ~21,764 & 84 & ~35,962 & (26) & --\\
\hline
freq $>$ 10\%& 30 & 5147 & 24 & 4,493 & 23 & 7,026 & 20 & ~13,744 & 21 & ~17,708 \\  
\hline
\end{tabular}
}
\caption{Results obtained on different subsets of the PubMed dataset.\label{result-tab}}
\end{table}

Table~\ref{result-tab} reports the results we obtained on
different subsets of the PubMed dataset with values of $minsup$
ranging from $2\%$ to $10\%$. From these results, we can
draw the following remarks. 

\medskip
\noindent
{\bf i) Soundness and Flexibility}. As the resolution performed by the CP
solver is sound and complete, our approach is able to mine the correct
and complete set of sequential patterns satisfying the different
constraints. We compared the sequential patterns extracted by our
approach with those found by \cite{BCCC2012cbms}, and the two
approaches return the same set of patterns. Table~\ref{result-tab}
depicts the number of closed sequential patterns according to
$minsup$. As expected, the lower $minsup$ is, the larger the
number of extracted sequential patterns. 

\medskip
\noindent
{\bf ii) Highlighting useful sequential patterns}. Our approach
allowed to extract several relevant {\it linguistic patterns}. 
Such patterns can be used to explain RD-gene relationships from PubMed
articles. For instance, three sequential patterns of great interest
were highlighted by the expert: 
\begin{enumerate}
\item $\langle (DISEASE)~(be)~(cause)~(by)~(mutation)~(in)~(the)~(GENE)\rangle$
\item $\langle (GENE)~(occur)~(in)~(DISEASE)\rangle$
\item $\langle (DISEASE)~(be)~(an)~(mutation)~(in)~(GENE)\rangle$
\end{enumerate}

From a biomedical point of vue, these sequential patterns are
interesting since they convey a notion of {\it causality} (i.e. gene
{\it cause} rare disease). 

\medskip
\noindent
{\bf iii) Computational Efficiency}. This experiment quantify
runtimes and the scalability of our approach. In practice, runtimes
vary according to the size of the datasets. For datasets with size up
to $200$, the set of all solutions is computed. We observe that
runtimes vary from few seconds for high frequency thresholds to about
few hours for low frequency thresholds. However, for large size
datasets ($\ge 200$) and low frequency thresholds (i.e. $minsup =
2\%$), the CP approach does not succeed to complete the extraction of
all closed sequential patterns within a timeout of $10$ hours. Indeed,
with the increase of the size of the dataset, the search space and the
runtime increase drastically, and the solver spends much more time to
find the first solution. 

Finally, note that comparing runtimes with those obtained by 
ad hoc approaches would be rather difficult. In fact, these approaches
use devoted techniques and do not offer the same level of genericity
and expressivity as in our CP approach. Moreover, they cannot push in
depth simultaneously different categories of constraints. Above all,
there does not exist any algorithm, neither tool, for extracting
sequential patterns under all the contraints proposed in our  work.

\section{Conclusion}
\label{section:conclusion}
We have proposed a flexible approach to mine sequential patterns of
items in a sequence database. The declarative side of the CP
framework easily enables us to address a large set of constraints and
leads to a unified framework for extracting sequential patterns under
constraints. Finally, the feasibility and the interest of our approach has
been highlighted through experiments on a case study in biomedical
literature for discovering gene-RD relations from PubMed articles.  

We are currently investigating a new direction to enhance the
efficiency of our approach: instead of constructing an automaton for
every sequence, it would be more efficient to build some variant of a
Prefix Tree Automata on the original dataset to avoid some
redundancies. Furthermore, we intend to extend our approach to discover sequential
patterns of itemsets in a sequence database. 
Discovering pattern sets is an attractive road to propose actionable
patterns and the CP modelling is a proper paradigm to tackle this
challenge \cite{DBLP:journals/tkde/GunsNR13,CI-KHIARI-10b}. Further
work is to address this issue. 

\medskip
\noindent
{\it Acknowledgements}. This work is partly supported by the ANR
(French Research National Agency) funded project FiCOLOFO
ANR-10-BLA-0214. We would like to thank Bruno Cr\'emilleux for his
valuable comments.

\bibliographystyle{plain}
{
\small
\bibliography{biblio}

\begin{thebibliography}{10}

\bibitem{DBLP:conf/icde/AgrawalS95}
R.~Agrawal and R.~Srikant.
\newblock Mining sequential patterns.
\newblock In Philip~S. Yu and Arbee L.~P. Chen, editors, {\em ICDE}, pages
  3--14. IEEE Computer Society, 1995.

\bibitem{DBLP:books/daglib/0018273}
K.~R. Apt.
\newblock {\em Principles of constraint programming}.
\newblock Cambridge University Press, 2003.

\bibitem{DBLP:conf/cicling/BechetCCC12}
N.~B{\'e}chet, P.~Cellier, T.~Charnois, and B.~Cr{\'e}milleux.
\newblock Discovering linguistic patterns using sequence mining.
\newblock In {\em CICLing (1)}, pages 154--165, 2012.

\bibitem{BCCC2012cbms}
N.~B{\'e}chet, P.~Cellier, T.~Charnois, and B.~Cr{\'e}milleux.
\newblock Sequential pattern mining to discover relations between genes and
  rare diseases.
\newblock In {\em IEEE Int. Symp. on Computer-Based Medical Systems (CBMS)},
  pages 1--6, 2012.

\bibitem{DBLP:journals/jmcm/Beldi94}
N.~Beldiceanu and E.~Contejean.
\newblock Introducing global constraints in {CHIP}.
\newblock {\em Journal of Mathematical and Computer Modelling}, 20(12):97--123,
  1994.

\bibitem{DBLP:conf/csclp/BessiereHHKW05}
C.~Bessi{\`e}re, E.~Hebrard, B.~Hnich, Z.~Kiziltan, and T.~Walsh.
\newblock Among, common and disjoint constraints.
\newblock In {\em CSCLP}, pages 29--43, 2005.

\bibitem{DBLP:conf/kdd/CadezHMSW00}
I.~V. Cadez, D.~Heckerman, C.~Meek, P.~Smyth, and S.~White.
\newblock Visualization of navigation patterns on a web site using model-based
  clustering.
\newblock In Raghu Ramakrishnan, Salvatore~J. Stolfo, Roberto~J. Bayardo, and
  Ismail Parsa, editors, {\em KDD}, pages 280--284. ACM, 2000.

\bibitem{DBLP:journals/tcs/ChangP97}
Chia-Hsiang Chang and Robert Paige.
\newblock From regular expressions to dfa's using compressed nfa's.
\newblock {\em Theor. Comput. Sci.}, 178(1-2):1--36, 1997.

\bibitem{DBLP:conf/ecai/CoqueryJSS12}
E.~Coquery, S.~Jabbour, L.~Sa\"{\i}s, and Yakoub Salhi.
\newblock A sat-based approach for discovering frequent, closed and maximal
  patterns in a sequence.
\newblock In Luc~De Raedt, Christian Bessi{\`e}re, Didier Dubois, Patrick
  Doherty, Paolo Frasconi, Fredrik Heintz, and Peter J.~F. Lucas, editors, {\em
  ECAI}, volume 242 of {\em Frontiers in Artificial Intelligence and
  Applications}, pages 258--263. IOS Press, 2012.

\bibitem{DBLP:series/ads/DongP07}
G.~Dong and J.~Pei.
\newblock {\em Sequence Data Mining}, volume~33 of {\em Advances in Database
  Systems}.
\newblock Kluwer, 2007.

\bibitem{DBLP:journals/tkde/GarofalakisRS02}
Minos~N. Garofalakis, R.~Rastogi, and K.~Shim.
\newblock Mining sequential patterns with regular expression constraints.
\newblock {\em IEEE Trans. Knowl. Data Eng.}, 14(3):530--552, 2002.

\bibitem{DBLP:journals/ai/GunsNR11}
T.~Guns, S.~Nijssen, and L.~De Raedt.
\newblock Itemset mining: A constraint programming perspective.
\newblock {\em Artif. Intell.}, 175(12-13):1951--1983, 2011.

\bibitem{DBLP:journals/tkde/GunsNR13}
T.~Guns, S.~Nijssen, and L.~De Raedt.
\newblock k-pattern set mining under constraints.
\newblock {\em IEEE Trans. Knowl. Data Eng.}, 25(2):402--418, 2013.

\bibitem{CI-KHIARI-10b}
M.~Khiari, P.~Boizumault, and B.~Cr\'emilleux.
\newblock Constraint programming for mining n-ary patterns.
\newblock In David Cohen, editor, {\em CP}, volume 6308 of {\em Lecture Notes
  in Computer Science}, pages 552--567. Springer, 2010.

\bibitem{DBLP:conf/icde/PeiHPCDH01}
J.~Pei, J.~Han, B.~Mortazavi-Asl, H.~Pinto, Q.~Chen, U.~Dayal, and M.~Hsu.
\newblock Prefixspan: Mining sequential patterns by prefix-projected growth.
\newblock In D.~Georgakopoulos and A.~Buchmann, editors, {\em ICDE}, pages
  215--224. IEEE Computer Society, 2001.

\bibitem{regular}
G.~Pesant.
\newblock A regular language membership constraint for finite sequences of
  variables.
\newblock In Mark Wallace, editor, {\em CP'04}, volume 2239 of {\em LNCS},
  pages 482--495. Springer, 2004.

\bibitem{DBLP:conf/kdd/RaedtGN08}
L.~De Raedt, T.~Guns, and S.~Nijssen.
\newblock Constraint programming for itemset mining.
\newblock In Ying Li, Bing Liu, and Sunita Sarawagi, editors, {\em KDD'08},
  pages 204--212. ACM, 2008.

\bibitem{treetagger}
H.~Schmid.
\newblock Probabilistic part-of-speech tagging using decision trees.
\newblock In {\em Proceedings of International Conference on New Methods in
  Language Processing}, September 1994.

\bibitem{DBLP:conf/edbt/SrikantA96}
R.~Srikant and R.~Agrawal.
\newblock Mining sequential patterns: Generalizations and performance
  improvements.
\newblock In Peter M.~G. Apers, Mokrane Bouzeghoub, and Georges Gardarin,
  editors, {\em EDBT}, volume 1057 of {\em Lecture Notes in Computer Science},
  pages 3--17. Springer, 1996.

\bibitem{DBLP:conf/dis/UgarteBLC12}
W.~Ugarte, P.~Boizumault, S.~Loudni, and B.~Cr{\'e}milleux.
\newblock Soft threshold constraints for pattern mining.
\newblock In Jean-Gabriel Ganascia, Philippe Lenca, and Jean-Marc Petit,
  editors, {\em Discovery Science}, volume 7569 of {\em Lecture Notes in
  Computer Science}, pages 313--327. Springer, 2012.

\bibitem{DBLP:conf/f-egc/UgarteBLCL13}
W.~Ugarte, P.~Boizumault, S.~Loudni, B.~Cr{\'e}milleux, and A.~Lepailleur.
\newblock D{\'e}couverte des soft-skypatterns avec une approche {PPC}.
\newblock In Christel Vrain, Andr{\'e} P{\'e}ninou, and Florence S{\`e}des,
  editors, {\em EGC}, volume RNTI-E-24 of {\em Revue des Nouvelles Technologies
  de l'Information}, pages 217--228. Hermann-{\'E}ditions, 2013.

\bibitem{DBLP:conf/icde/WangH04}
J.~Wang and J.~Han.
\newblock Bide: Efficient mining of frequent closed sequences.
\newblock In Z.~Meral {\"O}zsoyoglu and Stanley~B. Zdonik, editors, {\em ICDE},
  pages 79--90. IEEE Computer Society, 2004.

\bibitem{DBLP:conf/sdm/YanHA03}
X.~Yan, J.~Han, and R.~Afshar.
\newblock Clospan: Mining closed sequential patterns in large databases.
\newblock In Daniel Barbar{\'a} and Chandrika Kamath, editors, {\em SDM}. SIAM,
  2003.

\bibitem{DBLP:conf/cikm/Zaki00}
M.~J. Zaki.
\newblock Sequence mining in categorical domains: Incorporating constraints.
\newblock In {\em CIKM}, pages 422--429. ACM, 2000.

\bibitem{DBLP:journals/ml/Zaki01}
M.~J. Zaki.
\newblock Spade: An efficient algorithm for mining frequent sequences.
\newblock {\em Machine Learning}, 42(1/2):31--60, 2001.

\end{thebibliography}
}

\end{document}